\newtheorem{theorem}{Theorem}[section]
\newtheorem{lemma}[theorem]{Lemma}
\title{Deep Ordinal Regression using Optimal Transport Loss and Unimodal Output Probabilities}
\author{
Uri Shaham$^1$\footnote{Contact Author}
\and
Igal Zaidman$^2$
\and
Jonathan Svirsky\\
}
\date{%
    $^1$Center for Outcome Research and Evaluation, Yale University\\%
    $^2$Bar Ilan University\\[2ex]%
    \today
}
\begin{document}

\maketitle

\begin{abstract}
It is often desired that ordinal regression models yield unimodal predictions. 
However, in many recent works this characteristic is either absent, or implemented using soft targets, which do not guarantee unimodal outputs at inference. 
In addition, we argue that the standard maximum likelihood objective is not suitable for ordinal regression problems, and that optimal transport is better suited for this task, as it naturally captures the order of the classes.
In this work, we propose a framework for deep ordinal regression, based on unimodal output distribution and optimal transport loss.
Inspired by the well-known Proportional Odds model, we propose to modify its design by using an architectural mechanism which guarantees that the model output distribution will be unimodal.
We empirically analyze the different components of our proposed approach and demonstrate their contribution to the performance of the model.
Experimental results on eight real-world datasets demonstrate that our proposed approach consistently performs on par with and often better than several recently proposed deep learning approaches for deep ordinal regression with unimodal output probabilities, while having guarantee on the output unimodality. In addition, we demonstrate that proposed approach is less overconfident than current baselines.
\end{abstract}


\section{Introduction}\label{sec:intro}
Ordinal regression is an area of supervised machine learning, where the goal is to predict the value of a discrete dependent variable, whose set of (symbolic) possible values is ordered.
Despite often overshadowed by more common applications like classification and regression, ordinal regression covers a wide range of important applications, such as prediction of failure times, ranking, age estimation and many more.

Many practitioners often treat ordinal regression problems as classification or regression problems (for example, this was indeed the case with many submissions to Kaggle's Diabetic Retinopathy competition\footnote{https://www.kaggle.com/c/diabetic-retinopathy-detection/} in 2015). While having common characteristics with both classification and regression, ordinal regression can arguably be viewed a mid-point between the two.
An ordinal model is of course similar to a classification model, as both predict a discrete value (``label'') out of a finite set of possible ones. However, the existence of an order on set of labels, when available, can potentially lead to an improved performance, comparing to a standard classifier, which does not assume such order. This typically occurs via distinguishing between the severity of prediction mistakes: while in classification typically ''all mistakes are created equal", in ordinal regression different mistakes may be associated with different severity (for example, predicting "moderately-sized" when the ground truth value is "big" may be less severe than a "tiny" prediction.
In regression problems, the dependent variable naturally does take values from an ordered set. However, this set is typically a continuum. Moreover, regression performance may be sensitive to monotonic transformations of the dependent variable, while such sensitivity does not take place in ordinal regression problems. 
Hence one may expect that typical ordinal regression algorithms have potential to outperform classification or regression approaches, when the range of the dependent variable is finite and ordered. 
In section~\ref{sec:experiments} we will provide examples to the superiority of our proposed approach over classification and regression in benchmark tasks.

The arguably most fundamental ordinal regression model is the Proportional Odds Model (POM), a generalized linear model, similar in spirit to logistic regression, however where the logits are defined for cumulative probabilities. 
POM is typically trained via maximum likelihood (as is also the case for several recently proposed deep ordinal regression approaches, which will be reviewed in section~\ref{sec:related}).
We argue that likelihood is a sub-optimal measure of quality for ordinal regression setup, as it only considers the probability mass the model assigns to the true class, ignoring the remaining mass. This implicitly assumes that ``all mistakes are equal'', which, as discussed above, is not the case for ordinal regression. Hence we seek for an alternative measure of quality which may be more appropriate for ordinal regression. We argue that the optimal transport divergence might be a better fit. In addition, this divergence turns out to be particularly appealing, as it obtains a simple, differentiable form in the case that one of the distributions is Dirac, which is indeed the case in ordinal regression; this will be explained in Section~\ref{sec:preliminaries}. 

Another potential source of sub-optimality of POM (and of several recently-proposed approaches for deep ordinal regression) is the often-reasonable requirement that a probabilistic model for ordinal regression will output unimodal probabilities (i.e., that when moving in either direction from the most probable class, the probabilities predicted by the model will decay in a monotonic fashion. 
Although there are domains in which unimodality is not necessarily a desirable property, such as in movie rankings (where people may have either positive or negative definitive opinions about a particular movie), in many other real world domains it is a natural requirement, for example when predicting age of a person or a grade of a tumor, as it may be counter-intuitive to trust a model prediction which says that a predicted tumor grade is either 1 or 4, but not 2 or 3.
However, despite often being a desired characteristic, unimodality is unfortunately not always fulfilled. 
While this was identified by several recent works for deep ordinal regression, unimodality is often encouraged (but not enforced) via soft targets. 
In the next section, 
we will argue that this is a sub-optimal means to achieve unimodality. 
To the contrary, we propose  a novel mechanism to enforce unimodality of the output distribution, implemented via architectural design, and demonstrate that it does not hurt the level of performance.

Experimentally, we analyze the contribution of the optimal transport objective and our proposed unimodality mechanism to the performance of the model, and provide results on eight real world image benchmark datasets which demonstrate that our proposed approach consistently performs on par with and often better than several recently proposed approaches for deep ordinal regression, while having a unimodality guarantee. 
In addition, we demonstrate that the predictions made by our proposed approach tend to be consistently less overconfident than those of the competing methods, manifested in greater uncertainty in cases of wrong predictions.

\section{Related Work}\label{sec:related}

Being a traditional area of machine learning and statistics, there exists a large corpus of literature on ordinal regression. In this section we focus on approaches based on deep architectures.
Several such approaches were proposed in the recent years.
A common approach seems to be to turn the ordinal regression problem into a multi-label classification problem, for example~\cite{fu2018deep, liu2017deep, liu2018constrained, tv2019data, berg2020deep, cheng2008neural}.
We argue that the multi-label approach has two major problematic aspects: first, the output probabilities are not always guaranteed to be consistent, in the sense of increasing cumulative distribution (i.e., we would like to predict $\Pr(y \le 1) \le \Pr(y \le 2)\le\ldots\le \Pr(y \le k)$.
Second, even if the output probabilities are consistent, as is the case in~\cite{liu2018ordinal, cao2020rank} for example, the predicted class probabilities are not necessarily unimodal, i.e., there is no guarantee for existence of $j \in \{1,\ldots,k\}$ such that $\Pr(Y=1) \le \ldots \le \Pr(Y=j) \ge \ldots \ge \Pr(Y=k)$. This is the case in several recent works, e.g.,~\cite{liu2019probabilistic, vargas2020cumulative, pan2018mean}.

\cite{beckham2017unimodal} proposed an elegant mechanism to obtain unimodal output probabilities, based on either the Poisson or the Binomial distributions, which are both unimodal. In both cases their model outputs a scalar ($\lambda$ in case of the Poisson, $p$ in the case of the binomial), which is then mapped to a probability mass function that uses (after normalization) as the model output probabilities.
While being a convenient, architectural-based solution for the unimodality issue, their approach is inherently limited in its ability to express the level of uncertainty of the model's prediction: since a single parameter determines both the location of the mode, and the decay of the probabilities, the model cannot output a highly flat or highly peaked probability vector; in addition, instances of the same predicted class ought to have similar output probabilities.
Inspired by their approach, we utilize the normal distribution, in which one parameter determines the location while another determines the decay. In section~\ref{sec:experiments} we will demonstrate that this greater flexibility yields an improvement in performance. 
\cite{belharbi2019non} propose a constrained optimization approach to achieve unimodality. However, this comes at a cost of a somewhat cumbersome optimization process. More importantly, even if unimodality is indeed achieved for the train data, there are no guarantees that this will also be the case for unseen test data.

Several works propose to handle the unimodality requirement via soft targets, for example~\cite{gao2017deep, diaz2019soft, liu2019unimodal, liu2020unimodal}.
Despite the fact that usage of soft targets to obtain unimodality is sub-optimal, as it does not guarantee unimodal outputs at inference (and not even at train time), it often led to improved performance, comparing to~\cite{beckham2017unimodal}, where unimodality is guaranteed.  
In section~\ref{sec:experiments} we will demonstrate that the proposed approach enables one to enjoy both worlds, and have a unimodality guarantee while not hurting the quality of predictions.

Several works use cross entropy as a training objective, while using one-hot (or binary) targets, see, for example~\cite{belharbi2019non, vargas2020cumulative, fu2018deep, beckham2017unimodal, berg2020deep, tv2019data, cao2020rank}. As pointed out in several papers, and will also be demonstrated in section~\ref{sec:preliminaries}, in the case of one-hot targets, the cross entropy term equals the negative log of the probability assigned by the model to the true class, making it invariant to the distribution of the remaining probability mass. While a reasonable thing in a standard classification setting, this ignores the order of the classes, making it a sub-optimal choice for ordinal regression setting.
To overcome this limitation of cross entropy~\cite{hou2016squared}, followed by~\cite{beckham2017unimodal, liu2019unimodal} use optimal transport loss, which is a natural way to incorporate the order of the classes into the loss term. In this sense, it is similar to the approach we take in this manuscript. 

To summarize this section, we identify the following requirements for an appropriate ordinal regression model:
\begin{itemize}
    \item Unimodality of the model's output distribution.
    \item It is advantageous to enforce the unimodality via the design of the model, rather than via soft targets.
    \item A model utilizing one-hot targets should not be trained using cross entropy objective (or maximum likelihood in general).
    \item The decay of the output probabilities should reflect the uncertainty of the model in its predictions. 
\end{itemize}
These requirements naturally lead us to our proposed approach in section~\ref{sec:proposed}. However, before we specify it, we begin with a brief review of of the proportional odds model and optimal transport divergence.


\section{Preliminaries}\label{sec:preliminaries}
We begin this section with a description of the proportional odds model from a latent variable perspective. We then briefly review optimal transport as a divergence between two probability distributions.

\subsection{The Proportional Odds Model}\label{sec:pom}
Let $(X,Y)\in\mathcal{X}\times\mathcal{Y}$ be random variables, having joint probability $\mathcal{P}_{XY}$, where $\mathcal{X}=\mathbb{R}^d$ and $\mathcal{Y} = \{1,\ldots,k\}$, where $1,\ldots,k$ are considered as symbols. Let $\preceq$ be an order relation defined on $\mathcal{Y}$ such that $1 \preceq\ldots\preceq k$.
The proportional odds model is parametrized by $\alpha \in \mathbb{R}^{k-1}, \beta \in \mathbb{R}^d$ and applies to data $\{(x_i, y_i)\}_{i=1}^n$, sampled i.i.d from $\mathcal{P}_{XY}$. 

Let $\epsilon$ be a logistic random variable (thus having a sigmoid cumulative distribution function $F(x)=\frac{1}{1 + \exp(-x)}$), and let Z be a random variable defined as $Z = \beta^T X + \epsilon$. 
The entries of $\alpha$ use to define the cumulative conditional probabilities via
\begin{equation}
    \Pr (Y \preceq j | X=x) = \Pr(Z\le \alpha_j) = F(\alpha_j - \beta^T x). \label{eq:pom}
\end{equation}
Similarly to logistic regression, this yields linear log-odds (logits), however, defined with respect to cumulative terms
\begin{equation}
    \gamma_j 	\equiv \log\frac{\Pr (Y \preceq j | X=x)}{\Pr (Y \succ j | X=x)} = \alpha_j - \beta^Tx.\notag
\end{equation}

It is convenient to interpret equation~\eqref{eq:pom} by viewing $\beta^Tx$ as a factor that shifts the standard logistic density function, while the $\alpha_j$ terms are thresholds, with respect to which the cumulative probabilities are defined. This is depicted in Figure~\ref{fig:pom}.
\begin{figure}[t]
  \centering
    \includegraphics[height=.35\textwidth]{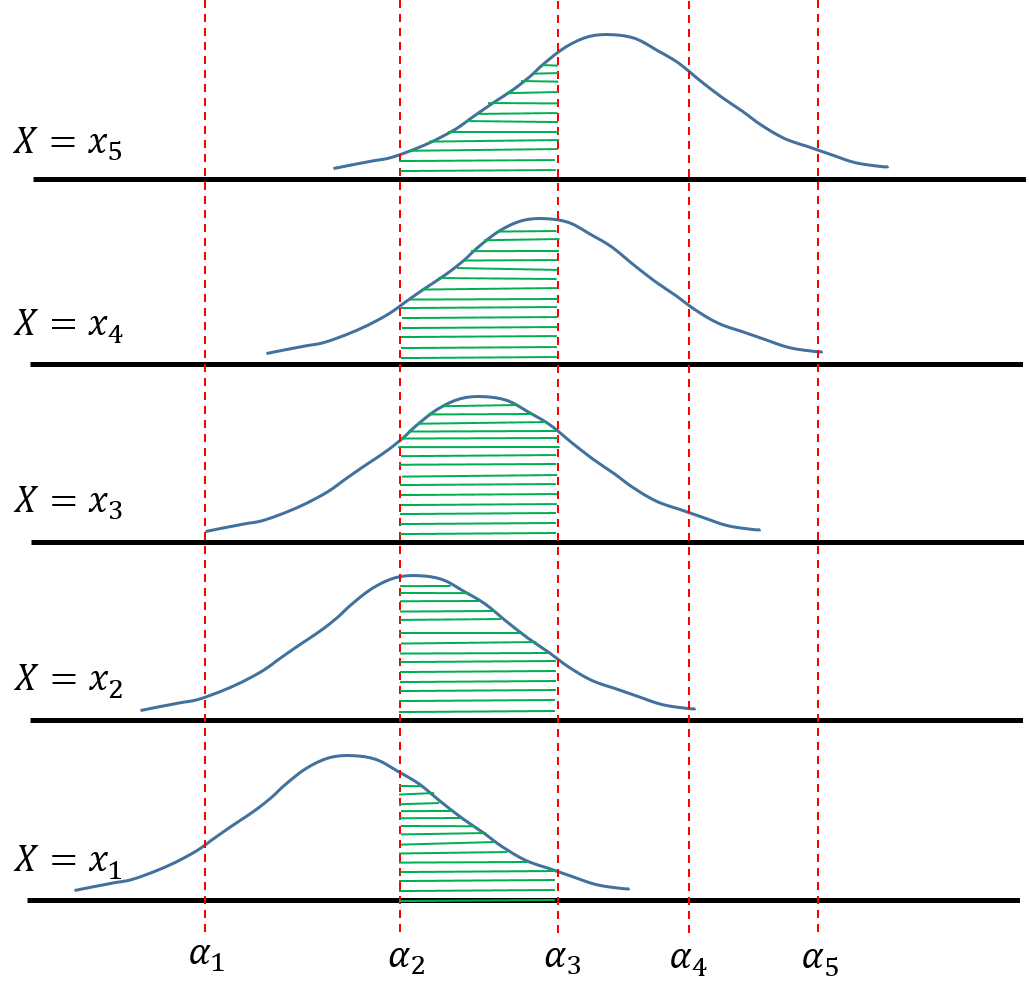}  
    \caption{The proportional odds model. $x_i$ is a realization of $X$. The standard logistic density is shifted by $\beta^T x_i$. The thresholds $\alpha_j$ define the bins which determine the probability predicted by the model to each class. For example, the green area defines the probability $\Pr(Y=3)$.} 
    \label{fig:pom}
\end{figure}
Let $(x, y)$ be a realization of $(X,Y)$. The likelihood assigned by the model to $(x,y)$ is
\begin{align}
    L(\alpha, \beta ; (x, y)) =&\Pr(Y=y | X=x; \alpha, \beta) \notag\\
    =&{\textit F}(\alpha_y - \beta^Tx) - {\textit F}(\alpha_{y-1} - \beta^Tx), \label{eq:likelihood}
\end{align}
considering $\alpha_0=-\infty$ and $\alpha_k=\infty$.  
The model is typically trained in a standard fashion by maximizing the log-likelihood function on the training data.

Despite its popularity, the POM suffers from two main issues:
First, the model's output probabilities are not necessarily unimodal. This is depicted in Figure~\ref{fig:not_unimodal}.
\begin{figure}[t]
  \centering
    \includegraphics[height=.12\textwidth]{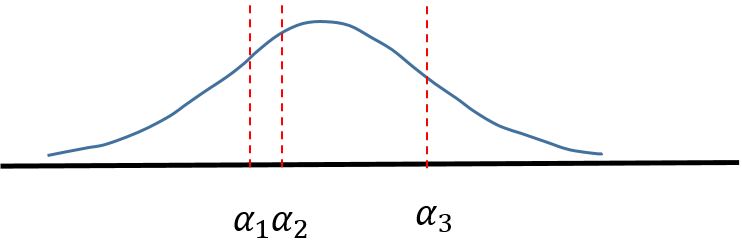}  
    \caption{POM does not always output unimodal probabilities. 
    Recall that $\Pr (Y=k) = \Pr(Z\le \alpha_k) - \Pr(Z\le \alpha_{k-1})$. Therefore, the above plot shows an example where the output probabilities are such that $\Pr(Y=1) > \Pr(Y=2) < \Pr(Y=3) > \Pr(Y=4)$ (i.e., the output probabilities are bimodal and not unimodal).} 
    \label{fig:not_unimodal}
\end{figure}
Second, the likelihood function~\eqref{eq:likelihood} depends only on the probability the model assigns to the correct class $y$, and is invariant to the way the remaining probability mass is assigned by the model. This ignores the order on the label set, and hence does not use important information that might be used to improve prediction quality, as depicted in Figure~\ref{fig:likelihood}.
\begin{figure}[t]
  \centering
    \includegraphics[height=.20\textwidth]{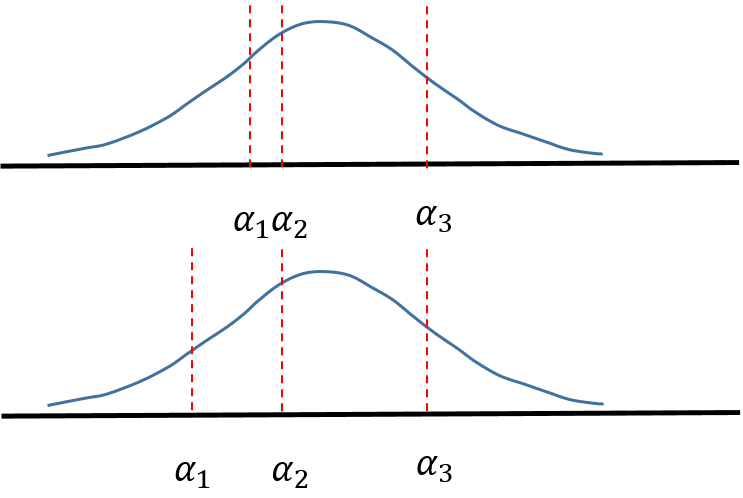}  
    \caption{The likelihood function of POM is invariant to the way the predicted probability mass of the incorrect classes is assigned. In the above example the correct class is 3, and the two instances have the same likelihood, despite the fact that in the bottom case, the probability mass assigns to neighboring class 2 is larger, making the bottom case more appropriate than the top one.} 
    \label{fig:likelihood}
\end{figure}

It is important to mention that as cross entropy term is essentially equivalent to model's negative log-likelihood function, this invariance to the partition of the remaining mass over the incorrect classes is common to all models trained via cross entropy minimization, as long as the target labels are one-hot. 

In section~\ref{sec:proposed} we will show how our method overcomes these two limitations of the POM. 

\subsection{Optimal Transport}\label{sec:opt_tran}

Let $(M, d)$ be a finite metric space, and let $p, q$ be probability mass functions defined on M.
Optimal transport, also denoted as the 1-Wasserstein distance and the Earth Mover Distance, between $p$ and $q$ is
	\begin{equation}
    OT(p, q) = \inf_{\gamma\in\Gamma}\int_{M\times M}c(x,y)d\gamma(x,y),\label{eq:emd}
\end{equation}
where $\Gamma$ is the set of all joint probabilities on $M\times M$, having marginals $p$ and $q$, and the metric $c$ specifies the costs of moving probability mass between every two elements of $M$.
This amounts to the optimal transportation of probability mass that transforms $p$ into $q$ and vice versa.
In the general case, the distance can be found by solving a linear program, and several relaxations have been proposed to accelerate its computations while preserving its geometrical properties, see, for example,~\cite{peyre2019computational, feydy2019interpolating, cuturi2013sinkhorn}.
However, in the case where $p$ is a Dirac point mass (i.e., having a a one-hot probability mass function),  solving equation~\eqref{eq:emd} becomes trivial and becomes
\begin{equation}
    OT(p, q) = \sum_{i=1}^k q_i c(i,j),\label{eq:ot}
\end{equation}
where $j$ is the correct class, and $k$ is the total number of classes,
as is also depicted in Figure~\ref{fig:emd}.
\begin{figure}[t]
  \centering
    \includegraphics[height=.25\textwidth]{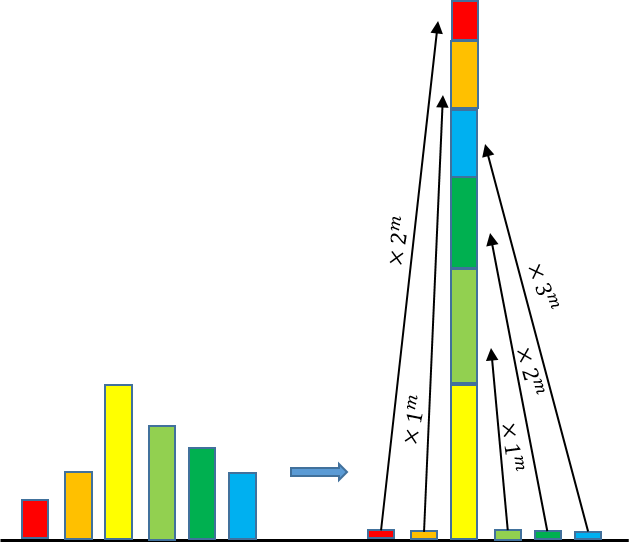}  
    \caption{Optimal transport between model output probability mass function and a Dirac (one-hot) probability mass, using $c(i, j) = |i-j|^m$ cost.} 
    \label{fig:emd}
\end{figure}
Letting $q$ denote a model's output probabilities and $p$ denote a one-hot target, equation~\eqref{eq:ot} is of course differential with respect to the model outputs $q$ and therefore can be used as a loss term for gradient-based optimization. 

The cost metric $c$ can incorporate domain knowledge in order to quantify the semantic distance between every two elements of $M$.
Since in our case $M=\{1,\ldots, k\}$ is an ordered space, a natural possibility is to define $$c(i, j) = |i-j|^m,$$ for some $m \ge 1$, i.e., ,mapping the symbolic class labels to consecutive integers, and computing powers of absolute differences.
When $m=1$, the optimal transport can also be computed as the $\ell_1$ distance between the cumulative mass functions\footnote{This holds when the classes are ordered.}, $\|\text{CMF}(p) - \text{CMF}(q)\|_1$, (see~\cite{levina2001earth}, for example). This is equivalent to the computation in equation~\eqref{eq:ot}, and also generalizes to arbitrary targets (i.e., not necessarily Dirac).

\section{The Proposed Approach}\label{sec:proposed}

In this section we describe our proposed mechanism for architectural-based generation of unimodal output probability distributions.

\subsection{Rational}
The fact that the unimodality is obtained directly via architectural design has a major advantage over using soft targets for training, since the output probabilities are guaranteed to be unimodal for every input instance, as is also the case for the mechanism proposed by~\cite{beckham2017unimodal}. 
However, unlike~\cite{beckham2017unimodal}, our proposed approach employs the normal distribution, depending separately on a location parameter and a scale parameter, so that the location of the mode is detached from the decay of the probability mass,  which yields a more flexible design than the single-parameter distributions used by ~\cite{beckham2017unimodal}, in which the single parameter determines both the mode and the decay. We will demonstrate in section~\ref{sec:experiments} that this greater flexibility is helpful in expressing prediction uncertainty.

\subsection{Unimodal Output Probabilities Generation}
Inspired by the POM, we utilize thresholds to define bins, so that the total mass inside each bin is the output probability of the corresponding class.
However, observe that the lack of unimodality of POM can be fixed by letting the bins be of equal length and remain fixed during training. 

Therefore, instead of learning the thresholds, during training a map $x\mapsto(\mu, \sigma)$ is learned, where $\mu$ is a location parameter, and $\sigma$ is a scale parameter, which define a $\mathcal{N}(\mu, \sigma^2)$ distribution, using which the output probabilities are computed. 

Formally, we divide the range $[-1, 1]$ to $k$ equal bins, where $k$ is the number of classes, defined by $-1=\alpha_0,\alpha_1,...,\alpha_k=1$,
so that $\alpha_{i} - \alpha_{i-1} = \frac{2}{k}$.
The (un-normalized) probabilities are given by
\begin{equation}
    \tilde{p}_i(x) = \Phi_{\mu(x),\sigma(x)}(\alpha_i) -  \Phi_{\mu(x),\sigma(x)}(\alpha_{i-1}),\label{eq:unnorm_probs}
\end{equation}
where $\Phi_{\mu,\sigma}(\cdot)$ is the $\mathcal{N}(\mu, \sigma^2)$ cumulative distribution function, and we have emphasized that $\mu,\sigma$ are in  fact functions of the input instance $x$. Since the bins cover [-1,1] and not the entire real line, we normalize the probabilities to obtain proper model predictions via
\begin{equation}
    \Pr{(Y=i | x)} =  p_i(x) \equiv \frac{ \tilde{p}_i(x)}{\sum_{j=1}^k  \tilde{p}_j(x)}.\label{eq:outputs}
\end{equation}
To compensate for the fact that the probability generating mechanism depends on less parameters than POM (2 for the former, $d + k-1$ for the latter), the map $x\mapsto(\mu, \sigma)$ is expressed via a deep network, which is therefore able to represent a complex nonlinear relation.
Our proposed mechanism for generation of unimodal output probabilities is depicted in Figure~\ref{fig:proposed}.
\begin{figure}[t]
  \centering
    \includegraphics[height=.10\textwidth]{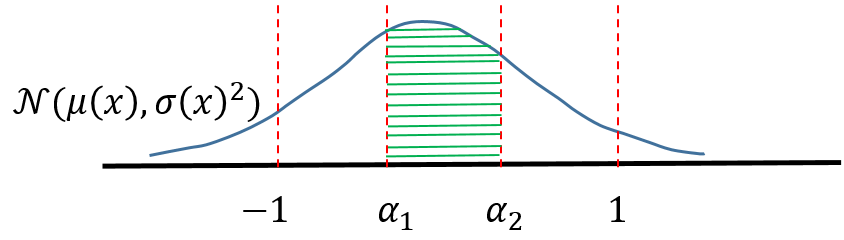}  
    \caption{Generation of unimodal output probabilities for $k=3$ classes. An input $x$ is mapped to a $(\mu, \sigma)$ pair, which define a normal distribution $\mathcal{N}(\mu(x), \sigma(x)^2)$ over the real line. The output probabilities are proportional to the mass in the bins, which are of equal length. The green area equals to the un-normalized probability $\tilde{p}_2(x)$, corresponding to $\Pr(Y=2|X=x)$.} 
    \label{fig:proposed}
\end{figure}

The following lemma, proved in Appendix~\ref{app:lemma}, establishes that the model output probabilities are indeed unimodal.
\begin{lemma}\label{lemma}
Let $x\in \mathbb{R}^d$ be an input to the model, which is mapped to $\mu=\mu(x), \sigma=\sigma(x)$, and let $p_1,\ldots p_k$ be the model output probabilities, generated via equation~\eqref{eq:outputs}. Then $p_1,\ldots p_k$ define a unimodal multinomial random variable.
\end{lemma}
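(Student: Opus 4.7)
The plan is to reduce the claim to a one-dimensional monotonicity statement about the bin-mass function of the Gaussian. First, note that normalization in~\eqref{eq:outputs} is a division by a positive constant $Z=\sum_j \tilde p_j(x)$ independent of $i$, so $p_i$ is unimodal if and only if $\tilde p_i$ is. Hence it suffices to show that the sequence $\tilde p_1,\ldots,\tilde p_k$ is unimodal in the sense that there is some index $j^\star$ with $\tilde p_1\le\cdots\le \tilde p_{j^\star}\ge\cdots\ge \tilde p_k$.

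Next, let $h=2/k$ denote the common bin width, so that $\alpha_i=\alpha_{i-1}+h$, and define the auxiliary function
\[
\psi(s) \;=\; \Phi_{\mu,\sigma}(s+h)-\Phi_{\mu,\sigma}(s).
\]
Then $\tilde p_i(x)=\psi(\alpha_{i-1})$, and since $\alpha_0<\alpha_1<\cdots<\alpha_{k-1}$ is strictly increasing, the sequence $\tilde p_i$ is just $\psi$ sampled at an increasing grid. Thus the entire problem reduces to showing that the continuous function $\psi:\mathbb{R}\to\mathbb{R}$ is itself unimodal, because a unimodal (increasing-then-decreasing) function evaluated on an increasing sequence of points yields an increasing-then-decreasing sequence.

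To establish unimodality of $\psi$, I would differentiate: $\psi'(s)=\phi_{\mu,\sigma}(s+h)-\phi_{\mu,\sigma}(s)$, where $\phi_{\mu,\sigma}$ is the Gaussian density. Using the symmetry $\phi_{\mu,\sigma}(t)=\phi_{\mu,\sigma}(2\mu-t)$ and strict log-concavity, one checks that $\phi_{\mu,\sigma}(s+h)=\phi_{\mu,\sigma}(s)$ iff $s+h-\mu=-(s-\mu)$, i.e.\ iff $s=\mu-h/2$, and that $\phi_{\mu,\sigma}(s+h)>\phi_{\mu,\sigma}(s)$ for $s<\mu-h/2$ while the reverse inequality holds for $s>\mu-h/2$. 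Consequently $\psi$ is strictly increasing on $(-\infty,\mu-h/2]$ and strictly decreasing on $[\mu-h/2,\infty)$, which is exactly the unimodality required.

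The only mildly subtle step is the sign analysis of $\phi_{\mu,\sigma}(s+h)-\phi_{\mu,\sigma}(s)$: one must compare $|s+h-\mu|$ with $|s-\mu|$ in the two cases $s+h\le\mu$ and $s+h>\mu$, and verify that below $\mu-h/2$ the shifted point is strictly closer to the mode $\mu$ (so has larger density) and above $\mu-h/2$ strictly farther. This is a short absolute-value computation, and no further structure of the Gaussian beyond its symmetric unimodality is needed; in particular, the proof generalizes verbatim to any location-scale family whose density is symmetric and strictly unimodal about its location parameter.
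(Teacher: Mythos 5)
Your proof is correct, but it takes a genuinely different route from the paper's. The paper argues bin-by-bin: it compares two adjacent bins $i$ and $i+1$ directly, splitting into the cases $\mu\le\alpha_{i-1}$ (where the density is monotone across both bins, so $\tilde p_i \ge \tfrac{2}{k}f(\alpha_i)\ge\tilde p_{i+1}$) and $\mu\in(\alpha_{i-1},\alpha_i)$ (where it decomposes each bin into sub-bins of lengths $a=\mu-\alpha_{i-1}$ and $b=\alpha_i-\mu$ and matches them up using the symmetry of $f$ about $\mu$). You instead introduce the sliding-window mass $\psi(s)=\Phi_{\mu,\sigma}(s+h)-\Phi_{\mu,\sigma}(s)$, show via $\psi'(s)=\phi(s+h)-\phi(s)$ and the absolute-value comparison that $\psi$ increases up to $s=\mu-h/2$ and decreases thereafter, and then observe that sampling a single-peaked function on an increasing grid gives a unimodal sequence. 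Your version avoids the case split, pinpoints the peak location explicitly (the bin whose left endpoint is nearest $\mu-h/2$, i.e., the bin containing $\mu$ up to ties), and generalizes verbatim to any location-scale family with a symmetric strictly unimodal density that admits a derivative of the CDF --- matching the paper's closing remark about the logistic and Cauchy cases. What the paper's argument buys in exchange is that it uses no calculus beyond integral monotonicity, so it also covers symmetric unimodal densities that are not differentiable at isolated points (e.g., Laplace); your derivative step would there need to be replaced by the same monotonicity-plus-symmetry comparison applied directly to $\psi(s_2)-\psi(s_1)$. One small point worth spelling out if you write this up: the claim that a unimodal function sampled at increasing points yields a unimodal sequence needs a one-line check for the single pair of grid points straddling the peak $\mu-h/2$, where the comparison can go either way --- but either outcome still leaves the sequence unimodal, with the argmax placed on whichever side wins.
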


We remark that all arguments made here with regard to the normal distribution also hold for other unimodal distributions, which are symmetric around $\mu$, such as Logistic($\mu, \sigma$) and Cauchy($\mu, \sigma$), which both have slower decay patterns.

\subsubsection{Training procedure}
To summarize this section, given an instance $x$, our model outputs parameters $\mu_\theta(x), \sigma_\theta(x)$ of a normal distribution, where $\theta$ collectively denotes the weights of the network. $\mu_\theta(x), \sigma_\theta(x)$ determine the un-normalized categorical probabilities $\tilde{p}_i(x),\; i=1,\ldots,k$ via equation\eqref{eq:unnorm_probs}, which are then normalized to  $p_i(x),\; i=1,\ldots,k$ using equation~\eqref{eq:outputs}. 
Following equation~\eqref{eq:ot}, the objective equation used to train the model is 
$$ l(\theta; x) = \sum_{i=1}^k p_i(x) c(i,j),$$ where $j\in{1,\ldots,k}$ is the correct class of $x$. 
In our experiments we used $c(i,j) = |i-j|$.


\section{Experimental Results}\label{sec:experiments}
To analyze the components of the proposed approach, we begin this section with small-scale ablation studies on the Abalone dataset. We then report experimental results on eight real world benchmark image datasets.

\subsection{Ablation Studies}
Here we use a modified version of the Abalone dataset\footnote{\url{https://archive.ics.uci.edu/ml/datasets/abalone}}. 
The dependent variable counts the number of rings inside the abalone shell, corresponding to its age, which should be predicted from 8 numeric features.
We slightly re-arranged the range of the dependent variable using a hand-crafted monotonic transform, resulting in a a partition of the data to eight classes, with number of instances per class ranging between 391 and 689.
The data file appears in supplementary material.

Table~\ref{tab:abalone_results} shows the performance of our proposed approach, along with the following baselines:
\begin{itemize}
    \item A neural network regression model, trained using squared error loss. The predicted class is obtained by rounding the model's (linear) output.
    \item A neural network classification model, trained using cross entropy loss and one-hot targets.
    \item A standard POM. As POM is a generalized linear model, we expect that it may be outperformed by the other baselines, which are all non-linear.
\end{itemize}
In addition, to evaluate the contribution of the optimal transport objective and the unimodal probabilities mechanism, we also train two ``hybrid'' models:
\begin{itemize}
    \item A neural network classification model, trained using optimal transport loss and one-hot targets (classification OT)
    \item A neural network model, with our proposed mechanism for unimodal output probabilities, trained using cross entropy loss, and one-hot targets (Unimodal CE)
    
\end{itemize}
All neural network models shared the same architecture, except for the output layer, and were trained using identical batch sizes and learning rate policies.

\begin{table}[ht]
\centering
\scalebox{0.9}{
\begin{tabular}{||c |c |c |c ||}
 \hline
 Method & MAE & QWK & Spearman\\ [0.5ex] 
 \hline\hline
 Regression & 1.01 $\pm$ .02 & 0.75 $\pm$ .02 & 0.76 $\pm$ .02 \\\hline 
 Classification & 1.12 $\pm$ .02 & 0.72 $\pm$ .02 & 0.73 $\pm$ .01 \\ \hline
 POM & 1.43 $\pm$ .01 & 0.63 $\pm$ .01 & 0.73 $\pm$ .01 \\\hline
 Proposed & {\bf0.98 $\pm$ .03} & {\bf 0.78 $\pm$ .01} & {\bf 0.79 $\pm$ .02}\\\hline
 \hline
 Classification OT & 1.12 $\pm$ .02 & 0.69 $\pm$ .01 & 0.75 $\pm$ .01 \\\hline
 Unimodal CE & 1.70 $\pm$ .54 & 0.63 $\pm$ .11 & 0.67 $\pm$ .07\\\hline
\hline

\end{tabular}}
\caption{Performance of various methods on the Abalone dataset, in terms of mean $\pm$ standard deviation over 5 independent trials.}
\label{tab:abalone_results}
\end{table}
Table~\ref{tab:abalone_results} demonstrates a few interesting properties of the proposed approach. First, by analyzing the performance of the baseline models, we observe that the proposed approach outperforms standard regression and classification, in terns of both Mean Absolute Error (MAE), Spearman correlation and Quadratic Weighted Kappa (QWK). 
All three methods (the proposed, classification and regression) outperform POM on this dataset, perhaps due to the latter being a linear model, unlike the other ones.

By looking at the performance of the hybrid models, we see that  optimal transport objective, missing in Unimodal CE, and the proposed unimodal probabilities generation mechanism, missing in Classifier OT, both contribute to the performance. 
In addition, we observe that the optimal transport objective does not manifest its full advantage with softmax-generated probabilities, perhaps as softmax often yield a peaked probability mass function. Similarly, cross entropy seem to work better with standard softmax, comparing to our proposed unimodality mechanism.
We observed these phenomena also in several other experiments, which are not described here.

\noindent \textbf{Analysis of uncertainty}
~\cite{belharbi2019non} claims that methods that learn a variance parameter may simply push it to zero, hence this parameter need to be manually set.
We show that in our case, the learned map $x\mapsto (\mu,\sigma)$ does not yield a vanishing standard deviation parameter. Rather, the standard deviation corresponds to the uncertainty of the model in its predictions.
To see that, we plot in Figure~\ref{fig:histo} the histograms of the probability of the mode of the predicted distribution (i.e., histograms of $\max\{p_1,\ldots,p_k\}$), as a function of whether or not the mode predicts the ground truth class or not.
\begin{figure}[t]
  \centering
    \includegraphics[height=.22\textwidth]{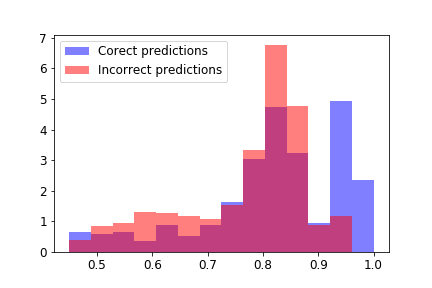}  
    \caption{Histograms showing the probability of the mode of the output probabilities for Abalone test instances with correct and incorrect predictions}. 
    \label{fig:histo}
\end{figure}
As can be seen, the modes are higher in cases of correct predictions, than in cases of incorrect predictions. Put another way, the prediction confidence is lower in wrong predictions, comparing to correct predictions. The higher the mode is, the more peaked the output probability function is, and the predictions indeed tend to be more accurate. 
We will observe a similar phenomenon in our experiments on real world datasets, reported next.

\subsection{Results on Real World Datasets}

\begin{table*}[h!]
\centering
\scalebox{0.99}{
\begin{tabular}{||l |l |c |c |c |c |c |c ||} 
 \hline
 Dataset & Method & MAE  & OOA & Spearman & QWK &\% Unimodal & Entropy ratio \\ [0.5ex] 
 \hline\hline
 \multirow{5}{*}{HCI}     & \cite{beckham2017unimodal} 
                          & .62 $\pm$ .04 
                          & .85 $\pm$ .02 
                          &  \textbf{.71 $\pm$ .03}
                          & .75 $\pm$ .02 
                          & \textbf{1  $\pm$ 0} 
                          & .82 $\pm$ .1\\\cline{2-8}
                          & \cite{liu2019unimodal}     
                          & .57 $\pm$ .05 
                          & .86 $\pm$ .02 
                          & .68 $\pm$ .04 
                          & .74 $\pm$ .03 
                          & .4  $\pm$ .04 
                          & .81 $\pm$ .1\\\cline{2-8}
                          & \cite{gao2017deep}         
                          & .71 $\pm$ .04 
                          & .87 $\pm$ .02     
                          & .67 $\pm$ .03     
                          & .7  $\pm$ .03     
                          & .99 $\pm$ .02     
                          & \textbf{1.28 $\pm$ .13}\\\cline{2-8}
                          & \cite{diaz2019soft}        
                          & .57 $\pm$ .04 
                          & .86 $\pm$ .02  
                          & .69 $\pm$ .03  
                          & .75 $\pm$ .02 
                          & .96 $\pm$ .01 
                          & .71 $\pm$ .06\\\cline{2-8}
                          & Proposed                    
                          & \textbf{.54  $\pm$ .03} 
                          & \textbf{.89  $\pm$ .01} 
                          & \textbf.7  $\pm$ .03
                          & \textbf{.77  $\pm$ .02} 
                          & \textbf{1    $\pm$ 0}       
                          & .95 $\pm$ 0.13\\\cline{2-8}\hline\hline
 \multirow{5}{*}{Adience} & \cite{beckham2017unimodal} 
                          & .53 $\pm$ .08 
                          & .94 $\pm$ .02 
                          & \textbf{.9 $\pm$ .02} 
                          & .91 $\pm$ .03 
                          & \textbf{1    $\pm$ 0}             
                          & 1 $\pm$ .01\\\cline{2-8}
                          & \cite{liu2019unimodal}     
                          & .48 $\pm$ .06
                          & .94 $\pm$ .02 
                          & .88 $\pm$ .03 
                          & .9  $\pm$ .03 
                          & .53 $\pm$ .05    
                          & 1.08 $\pm$ .01\\\cline{2-8}
                          & \cite{gao2017deep}         
                          & .5  $\pm$ .08 
                          & .94 $\pm$ .02 
                          & .88 $\pm$ .02 
                          & .9  $\pm$ .03 
                          & .6  $\pm$ .05 
                          & 1.  $\pm$ .01\\\cline{2-8}
                          & \cite{diaz2019soft}        
                          & .47 $\pm$ .07 
                          & .94 $\pm$ .02 
                          & .89 $\pm$ .01 
                          & .91 $\pm$ .03 
                          & .99 $\pm$ .01       
                          & 1.06 $\pm$ .03\\\cline{2-8}
                          & Proposed                    
                          & .\textbf{45 $\pm$ .05} 
                          & \textbf{.95 $\pm$ .01} 
                          & \textbf{.9  $\pm$ .02} 
                          & \textbf{.92 $\pm$ .02} 
                          & \textbf{1   $\pm$ 0}       
                          & \textbf{1.19 $\pm$ .16} \\\cline{2-8}\hline\hline 
\multirow{5}{*}{Retina MNIST}       
                          & \cite{beckham2017unimodal} 
                          & .78  $\pm$ .02 
                          & .8   $\pm$ .01     
                          & .6   $\pm$ .01     
                          & .55  $\pm$ .02     
                          & \textbf{1    $\pm$ 0}    
                          & 1.02 $\pm$ 0.011\\\cline{2-8}
                          & \cite{liu2019unimodal}     
                          & .69 $\pm$ .02 
                          & .82 $\pm$ .01     
                          & .6  $\pm$ .02     
                          & .58 $\pm$ .02     
                          & .69 $\pm$ .04    
                          & 1.05  $\pm$ .02\\\cline{2-8}
                          & \cite{gao2017deep}         
                          & .76 $\pm$ .09 
                          & .8  $\pm$ .04     
                          & .59 $\pm$ .06     
                          & .54 $\pm$ .09     
                          & .94 $\pm$ .07       
                          & 1.  $\pm$ .01\\\cline{2-8}
                          & \cite{diaz2019soft}        
                          & .77 $\pm$ .06 
                          & .79 $\pm$ .3     
                          & .57 $\pm$ .05     
                          & .56 $\pm$ .05     
                          & .88 $\pm$ .03     
                          & 1.13 $\pm$ .01\\\cline{2-8}
                          & Proposed                    
                          & \textbf{.68 $\pm$ .01} 
                          & \textbf{.83  $\pm$ .01}     
                          & \textbf{.62  $\pm$ .01}     
                          & \textbf{.6   $\pm$ .01}     
                          & \textbf{1    $\pm$ 0}     
                          & \textbf{2.08 $\pm$ .08}\\\cline{2-8}
\hline\hline
 \multirow{5}{*}{FG-NET}  
                          & \cite{beckham2017unimodal} 
                          & .46 $\pm$ .01 
                          & .94 $\pm$ .03 
                          & .75 $\pm$ .04
                          & .8  $\pm$ .03 
                          & \textbf{1  $\pm$ 0} 
                          & 1.01 $\pm$ .01\\\cline{2-8}
                          & \cite{liu2019unimodal}     
                          & .36 $\pm$ .05 
                          & .96 $\pm$ .01 
                          & .82 $\pm$ .06 
                          & .83 $\pm$ .05 
                          & .2  $\pm$ .04 
                          & 1.27 $\pm$ .1\\\cline{2-8}
                          & \cite{gao2017deep}         
                          & .46 $\pm$ .05 
                          & .94 $\pm$ .02     
                          & .75 $\pm$ .05     
                          & .77 $\pm$ .04     
                          & .09 $\pm$ .03     
                          & 1. $\pm$ .1\\\cline{2-8}
                          & \cite{diaz2019soft}        
                          & .38 $\pm$ .05 
                          & .95 $\pm$ .02  
                          & .8  $\pm$ .04  
                          & .83 $\pm$ .04 
                          & .98 $\pm$ .01 
                          & 1.13 $\pm$ .04\\\cline{2-8}
                          & Proposed                    
                          & \textbf{.35  $\pm$ .03} 
                          
                          & \textbf{.98  $\pm$ .01} 
                          & \textbf{.84  $\pm$ .02} 
                          & \textbf{.87  $\pm$ .03} 
                          & \textbf{1    $\pm$ 0}       
                          & \textbf{1.58 $\pm$ 0.16}\\\cline{2-8}
\hline\hline
 \multirow{5}{*}{AAF}     
                          & \cite{beckham2017unimodal} 
                          & .44 $\pm$ .01 
                          & .97 $\pm$ .01 
                          & \textbf{.83 $\pm$ .01}
                          & .85 $\pm$ .05 
                          & \textbf{1  $\pm$ 0} 
                          & 1. $\pm$ .01\\\cline{2-8}
                          & \cite{liu2019unimodal}     
                          & .39 $\pm$ .01 
                          & \textbf{.98 $\pm$ .01} 
                          & .82 $\pm$ .01 
                          & .85 $\pm$ .01 
                          & .7  $\pm$ .03 
                          & 1.08 $\pm$ .01\\\cline{2-8}
                          & \cite{gao2017deep}         
                          & .4 $\pm$ .01 
                          & .98 $\pm$ .01     
                          & .82 $\pm$ .01     
                          & .85  $\pm$ .01 
                          & .9  $\pm$ .03     
                          & 1. $\pm$ .01\\\cline{2-8}
                          & \cite{diaz2019soft}        
                          & .39 $\pm$ .01 
                          & \textbf{.98 $\pm$ .01}  
                          & .82 $\pm$ .02  
                          & .85 $\pm$ .01
                          & \textbf{1 $\pm$ .01} 
                          & 1.05 $\pm$ .01\\\cline{2-8}
                          & Proposed                    
                          & \textbf{.38  $\pm$ .01} 
                          & \textbf{.98  $\pm$ .01} 
                          & \textbf{.83  $\pm$ .01} 
                          & \textbf{.86  $\pm$ .01} 
                          & \textbf{1    $\pm$ 0}       
                          & \textbf{1.2 $\pm$ 0.01}\\\cline{2-8}
\hline\hline
 \multirow{5}{*}{AFAD-LITE}     
                          & \cite{beckham2017unimodal} 
                          & .51 $\pm$ .01 
                          & .91 $\pm$ .01 
                          & .67 $\pm$ .01
                          & .69 $\pm$ .01 
                          & \textbf{1  $\pm$ 0} 
                          & 2.18 $\pm$ .05\\\cline{2-8}
                          & \cite{liu2019unimodal}     
                          & .5 $\pm$ .01 
                          & .92 $\pm$ .01 
                          & .67 $\pm$ .01 
                          & .69 $\pm$ .01 
                          & .96  $\pm$ .01 
                          & 1.11 $\pm$ .01\\\cline{2-8}
                          & \cite{gao2017deep}         
                          & .5 $\pm$ .01 
                          & \textbf{.93 $\pm$ .01}     
                          & .67 $\pm$ .01     
                          & .69 $\pm$ .01     
                          & \textbf{1 $\pm$ .01}     
                          & 1 $\pm$ .01\\\cline{2-8}
                          & \cite{diaz2019soft}        
                          & .5 $\pm$ .01 
                          & .92 $\pm$ .01  
                          & .67 $\pm$ .01  
                          & .69 $\pm$ .01 
                          & \textbf{1 $\pm$ .01} 
                          & 1.1 $\pm$ .01\\\cline{2-8}
                          & Proposed                    
                          & \textbf{.49  $\pm$ .01} 
                          & \textbf{.93  $\pm$ .01} 
                          & \textbf{.68  $\pm$ .01} 
                          & \textbf{.7  $\pm$ .01} 
                          & \textbf{1    $\pm$ 0}       
                          & \textbf{2.26 $\pm$ 0.23}\\\cline{2-8}
\hline\hline
 \multirow{5}{*}{EVA}     
                          & \cite{beckham2017unimodal} 
                          & .63 $\pm$ .02 
                          & .92 $\pm$ .01 
                          & \textbf{.6 $\pm$ .03}
                          & \textbf{.6 $\pm$ .02} 
                          & \textbf{1  $\pm$ 0} 
                          & 1. $\pm$ .01\\\cline{2-8}
                          & \cite{liu2019unimodal}     
                          & .61 $\pm$ .02 
                          & .92 $\pm$ .01 
                          & .56 $\pm$ .03 
                          & .56 $\pm$ .03 
                          & .68  $\pm$ .02 
                          & 1.04 $\pm$ .01\\\cline{2-8}
                          & \cite{gao2017deep}         
                          & .62 $\pm$ .03 
                          & .92 $\pm$ .01     
                          & .55 $\pm$ .03     
                          & .54  $\pm$ .03     
                          & .91 $\pm$ .02     
                          & 1. $\pm$ .01\\\cline{2-8}
                          & \cite{diaz2019soft}        
                          & .59 $\pm$ .03 
                          & .93 $\pm$ .01  
                          & .57 $\pm$ .03  
                          & .55 $\pm$ .03 
                          & .99 $\pm$ .01 
                          & 1.01 $\pm$ .01\\\cline{2-8}
                          & Proposed                    
                          & \textbf{.58  $\pm$ .02} 
                          & \textbf{.94  $\pm$ .01} 
                          & .58  $\pm$ .03
                          & .56  $\pm$ .03 
                          & \textbf{1    $\pm$ 0}       
                          & \textbf{1.05 $\pm$ 0.02}\\\cline{2-8}
\hline\hline
 \multirow{5}{*}{WIKI}     
                          & \cite{beckham2017unimodal} 
                          & .68 $\pm$ .01 
                          & .92 $\pm$ .01 
                          & .68 $\pm$ .01
                          & .68 $\pm$ .01 
                          & \textbf{1  $\pm$ 0} 
                          & 1. $\pm$ .01\\\cline{2-8}
                          & \cite{liu2019unimodal}     
                          & \textbf{.42 $\pm$ .01} 
                          & \textbf{95. $\pm$ .01} 
                          & \textbf{.69 $\pm$ .01} 
                          & .7 $\pm$ .01 
                          & .95  $\pm$ .01 
                          & \textbf{1.05 $\pm$ .01}\\\cline{2-8}
                          & \cite{gao2017deep}         
                          & .44 $\pm$ .01 
                          & .94 $\pm$ .01     
                          & .68 $\pm$ .01     
                          & .7  $\pm$ .01     
                          & .97 $\pm$ .01     
                          & 1. $\pm$ .01\\\cline{2-8}
                          & \cite{diaz2019soft}        
                          & .44 $\pm$ .01 
                          & .94 $\pm$ .01  
                          & \textbf{.69 $\pm$ .01}  
                          & .7 $\pm$ .01 
                          & .99 $\pm$ .01 
                          & 1.03 $\pm$ .01\\\cline{2-8}
                          & Proposed                    
                          & .43  $\pm$ .01 
                          & \textbf{.95  $\pm$ .01} 
                          & \textbf{.69  $\pm$ .01} 
                          & \textbf{.71  $\pm$ .01} 
                          & \textbf{1    $\pm$ 0}       
                          & 1.01 $\pm$ 0.01\\\cline{2-8}\hline\hline

\end{tabular}}

\caption{Performance of various methods on real world datasets, in a mean $\pm$ std format.}
\label{tab:real_results}
\end{table*}

\subsubsection{Datasets}
We evaluate our method on eight real world benchmark image datasets, involving various ordinal regression tasks: age-detection (Adience~\cite{eidinger2014age}, FG-Net~\cite{fu2014interestingness}, AAF~\cite{cheng2019exploiting}, AFAD-LITE~\cite{niu2016ordinal}, WIKI~\cite{Rothe-IJCV-2018}), bio-medical image classification (Retina-MNIST~\cite{yang2021medmnist}), historical image dating (HCI~\cite{palermo2012dating}) and image aesthetics estimation (EVA~\cite{kang2020eva}). A more detailed description of the datasets appears in Appendix~\ref{app:datasets}. Some examples from the Adience, HCI and Retina-MNIST datasets are shown in Figures~\ref{fig:adience},~\ref{fig:hci} and~\ref{fig:retina}.

\begin{figure}[t]
  \centering
    \includegraphics[height=.22\textwidth]{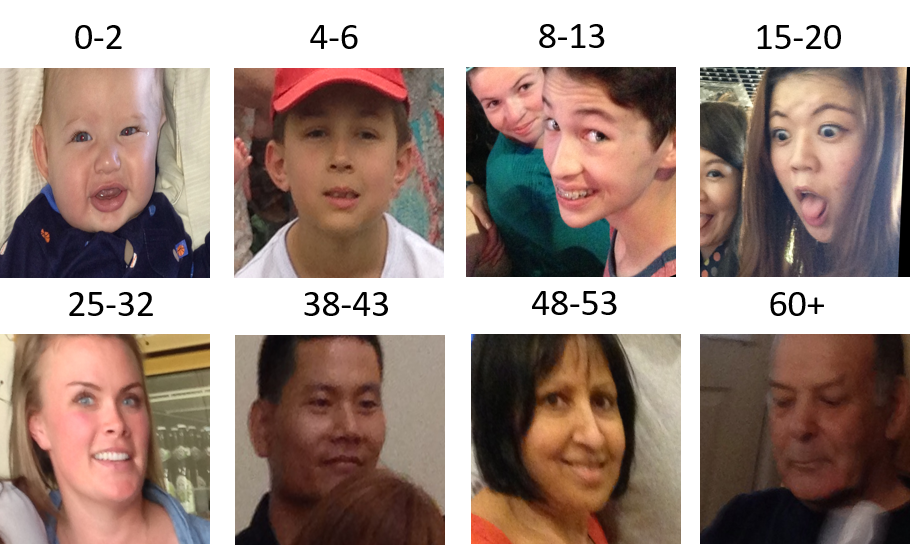}  
    \caption{Examples from the Adience dataset. Age category is indicated above each image.}. 
    \label{fig:adience}
\end{figure}

\begin{figure}[t]
  \centering
    \includegraphics[height=.16\textwidth]{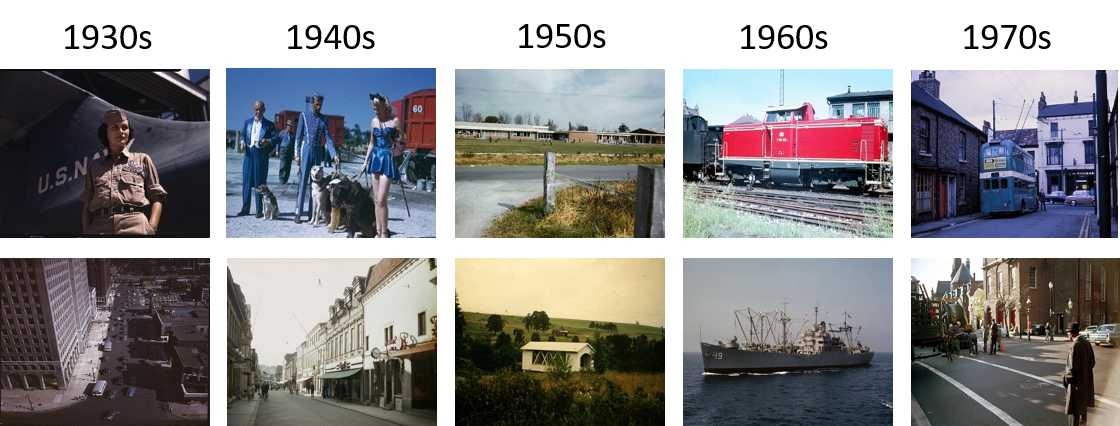}  
    \caption{Examples from the HCI dataset. Decades categories are indicated above.}. 
    \label{fig:hci}
\end{figure}

\begin{figure}[t]
  \centering
    \includegraphics[height=.109\textwidth]{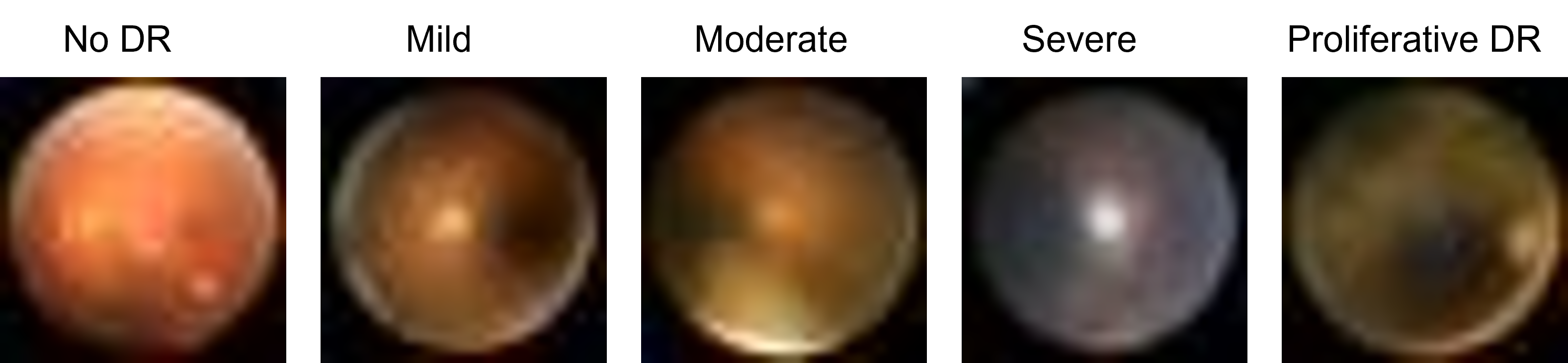}  
    \caption{Examples from the Retina mnist dataset. Diabetic Retinopathy classes are indicated above.} 
    \label{fig:retina}
\end{figure}

\subsubsection{Methods Compared}
We compare our proposed approach to four recently proposed approaches for deep ordinal regression, with unimodal output probabilities:
\begin{itemize}
    \item DLDL~\cite{gao2017deep}, an approach utilizing soft labels, generated using squared exponentially decaying distributions, trained using Kullback-Leibler divergence minimization (equivalent to cross entropy minimization).
    \item SORD~\cite{diaz2019soft}, an approach utilizing soft labels, generated using linear exponentially decaying distributions, trained using Kullback-Leibler divergence minimization.
    \item \cite{beckham2017unimodal}, where architectural-based unimodal output probabilities are generated using binomial distribution (single-learned parameter), trained using optimal transport loss.
    \item \cite{liu2019unimodal}, an approach utilizing soft labels, created as a mixture of Dirac, uniform and linear exponentially decaying distributions, trained using optimal transport loss.
\end{itemize}
In order to perform a fair comparison, we implemented all methods, using the same image transformations, backbone CNN and training procedures, so that the methods differ only in their output layer architectures and loss functions. We performed 5 independent trials, using the same train-validation-test splits for all methods. Additional technical details can be found in Section~\ref{seq:details}. For reproducibility, our GitHub repository~\url{https://github.com/jsvir/uniord} contains code reproducing the results reported in this section. 

\subsubsection{Evaluation Metrics}
We report several commonly-used evaluation metrics for ordinal regression tasts: MAE, One-Off Accuracy (OOA), Spearman correlation, QWK, as well as the percentage of test examples with unimodal predicted output probabilities. 
In addition, viewing the output probabilities as a multinomial distribution, we evaluate the level of overconfidence by comparing the ratio of the average entropy in incorrect predictions, to the average entropy in correct predictions.
A higher ratio is therefore desirable and indicates a lower level of overconfidence.

\subsubsection{Results}

Table~\ref{tab:real_results} shows the test results of each method on the eight benchmark datasets.
As can be seen, the proposed approach performs at least on-par and often better than the compared baselines, in a fairly consistent manner, across the various datasets and evaluation metrics. 
In addition, observe that only the proposed approach and the method of \cite{beckham2017unimodal} output unimodal probabilities, both via architectural design, while the other baselines, trained using soft targets, do not always output unimodal probabilities.
Moreover, on six of the eight datasets, the proposed approach outperforms all baselines also in terms of overconfidence.

\section{Conclusion}\label{sec:Conclusion}
In this manuscript we presented an approach for deep ordinal regression, inspired by the proportional odds model, utilizing an architectural mechanism for generation of unimodal output probabilities, and trained using optimal transport objective. 
We empirically analyzed the components of the proposed approach, and demonstrated that they both contribute to the performance of the model. 
We demonstrated that while performing on-par with or better than other recently proposed approaches for ordinal regression, our proposed method enjoys the benefits of guaranteed unimodal output probabilities, and of less overconfidence about its incorrect predictions.



{\small
\bibliographystyle{ieee_fullname}
\bibliography{ordinal}
}
\clearpage
\begin{appendix}

\section{Proof of Lemma~\ref{lemma}}\label{app:lemma}

\begin{proof}
Let $p_i, p_{i+1}$ be the output probabilities of two adjacent classes, and let $-1=\alpha_0,\ldots, \alpha_k=1$ be the thresholds. 
We will show that (i) if $\mu \le \alpha_{i-1}$ then $p_i \ge p_{i+1}$. Symmetrical argument will then imply that if $\mu \ge \alpha_{i+1}$ then $p_{i+1} \ge p_{i}$ (ii) if $\mu \in (\alpha_{i-1}, \alpha_{i})$ then $p_i > p_{i+1}$, whenever the latter exists. Similarly, this would imply that $p_i > p_{i-1}$. Together, (i) and (ii) will imply the statement of the lemma.

Denote by $f$ the density of the $\mathcal{N}(\mu, \sigma^2)$ distribution. To prove (i) observe that $\tilde{p}_i > \frac{2f(\alpha_{i})}{k} > \tilde{p}_{i+1}$, which implies $p_i > p_{i+1}$.

To prove (ii), divide the $i$'th bin to two sub-bins $B_{i, 1}, B_{i, 2}$, of lengths $a=\mu-\alpha_{i-1}$ and $b=\alpha_i-\mu$, respectively. Similarly, divide the $i+1$'th bin to two bins $B_{i+1, 1}, B_{i+1, 2}$ lengths $b$ and $a$.
Then from (i) 
\begin{equation}
    \int_{B_{i, 2}}f(x)dx > \int_{B_{i+1, 1}}f(x)dx. \label{eq:b1}
\end{equation}
In addition, observe that
\begin{align}
\int_{B_{i, 1}}f(x)dx &= \int_0^af(\mu + x)dx\notag\\ 
&> \int_0^af(\mu + 2b + x)dx\notag\\
&=\int_{B_{i+1, 2}}f(x)dx.\label{eq:b2}
\end{align}
Adding up equations~\eqref{eq:b1} and~\eqref{eq:b2}, we obtain $\tilde{p}_i > \tilde{p}_{i+1}$, which gives $p_i > p_{i+1}$.

\end{proof}

\section{Datasets}\label{app:datasets}
Tabel~\ref{tab:datasets} contains information on the benchmark datasets used for our experiments. 

 \noindent \textbf{HCI (Historical Color Image)} dataset contains 1326 images, partitioned to 5 classes, corresponding to decades from 1930s to 1970s, and the task is to associate each images with the decade it was taken at.
Random affine, radom horizontal / vertical flips and random crops of 224 transformations are applied during the training. The images are normalized in each color channel with mean and standard deviation of 0.5. The dataset was randomly split to the train/test as described in the Table~\ref{tab:technical}.

\noindent \textbf{Adience}: During the training the images are resized to (256,256) and random crop of size 224 and random horizontal flip are applied as augmentations. 

\noindent \textbf{FG-Net}: We partitioned the dataset to 8 classes, corresponding to decades. Augmentations were same as in the Adience experiment.

\noindent \textbf{RetinaMNIST} dataset has  5 classes and we apply random affine, horizontal and vertical flips as augmentations during the training. The size of the images is (28,28) as it provided by the dataset contributors. The train/test splits are proved by the contributors and were used as-is.

\noindent \textbf{AFAD-LITE} is a subset of the full AFAD dataset, which contains images of 22 continuous ages (for 15 to 40), in the amount of 60K.
We partitioned the dataset into 4 classes and augmentations were the same as in the Adience experiment.

\noindent \textbf{EVA (Explainable Visual Aesthetics)} dataset containes 4070 images aesthetically ranked from 0 to 10 by multiple voters. We calculate the average score for each image and partition the data into 5 classes in accordance with the average score. Augmentations were same as in the Adience experiment.

\noindent \textbf{AAF (All-Age-Faces)} dataset is already pre-processed and contains 13,322 face images (mostly Asian), distributed across all ages (from 2 to 80). We partitioned the dataset into 6 classes and augmentations were the same as in the Adience experiment.

\noindent \textbf{WIKI} dataset contains 62,328 images of celebrities from wikipedia with ages ranging form 1 to 100. Images with wrong timestamps are removed, and the dataset is partitioned into 6 classed. Augmentations were same as in the Adience experiment.

Figures~\ref{fig:afad},~\ref{fig:fgnet},~\ref{fig:eva},~\ref{fig:aaf},~\ref{fig:wiki} show examples from the AFAD-LITE, FG-Net, EVA, AAF and WIKI datasets.

\begin{figure}[t]
  \centering
    \includegraphics[height=.12\textwidth]{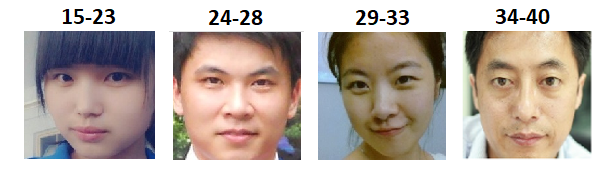}  
    \caption{Examples from the AFAD-LITE dataset. Age classes are indicated above. } 
    \label{fig:afad}
\end{figure}

\begin{figure}[t]
  \centering
    \includegraphics[height=.25\textwidth]{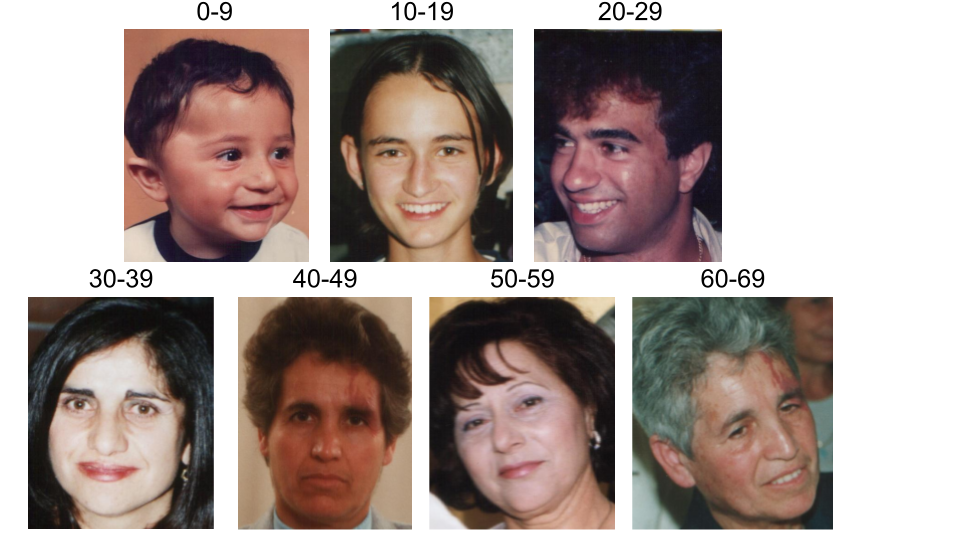}  
    \caption{Examples from the FG-Net dataset. Age classes are indicated above. 
    \label{fig:fgnet}}
\end{figure}

\begin{figure}[t]
  \centering
    \includegraphics[height=.1\textwidth]{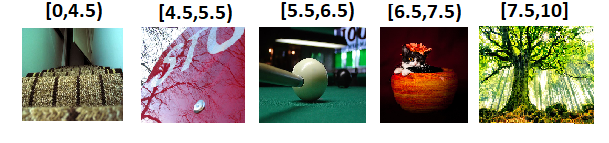}  
    \caption{Examples from the EVA dataset. Aesthetics classes are indicated above. }
    \label{fig:eva}
\end{figure}

\begin{figure}[t]
  \centering
    \includegraphics[height=.10\textwidth]{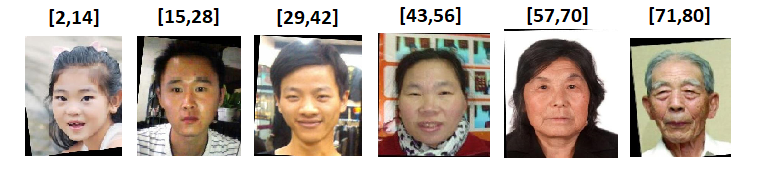}  
    \caption{Examples from the AAF dataset. Age classes are indicated above. } 
    \label{fig:aaf}
\end{figure}

\begin{figure}[t]
  \centering
    \includegraphics[height=.1\textwidth]{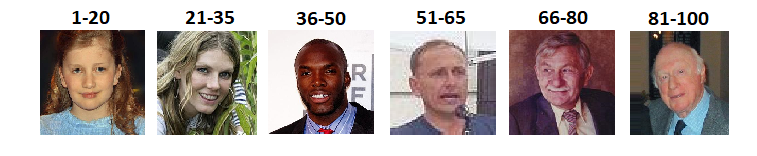}  
    \caption{Examples from the WIKI dataset. Age classes are indicated above. } 
    \label{fig:wiki}
\end{figure}

\begin{table*}[h!]
\centering
\scalebox{0.8}{
\begin{tabular}{||l |l | c | c  | c||}
 \hline
Dataset   & Task & Train images & Test Images & Classes \\ [0.5ex] \hline\hline
Adience      & age estimation         &\multicolumn{2}{c|} {pre-defined splits}   & 8 \\\hline 
HCI          & image dating           & 1,276 & 50 & 5\\\hline 
FG-Net       & age estimation         & 902 & 100 & 8\\\hline 
RetinaMNIST & DR classification      & 1200 & 400 & 5\\\hline
AFAD-LITE    & age estimation         & 37980 & 11869 & 4\\\hline
AAF          & age estimation         & 9058 & 2665 & 6\\\hline
EVA          & aesthetics estimation  & 2940 & 611 & 5\\\hline
WIKI         & age estimation         & 38660 & 12082 & 6\\\hline
 
\end{tabular}}
\caption{Benchmark datasets characteristics}
\label{tab:datasets}
\end{table*}

\section{Technical details}\label{seq:details}
Table~\ref{tab:technical} shows the technical details for the experiments on the real world benchmark datasets reported in this manuscript.

\begin{table*}[h!]
\centering
\scalebox{0.8}{
\begin{tabular}{||l |c | c |c  |c |c ||}
 \hline
Dataset   & Epochs & Batch size & initial LR & Decay LR after (epochs) & Weight Decay \\ [0.5ex] \hline\hline
Adience      & 100 & 64 & $10^{-4}$ & 40 & $10^{-5}$ \\\hline 
HCI          & 500 & 16 & $10^{-4}$ & 100, 300 & $10^{-3}$\\\hline 
FG-Net       & 100 & 64 & $10^{-4}$ & 40 & $10^{-4}$\\\hline 
RetinaMNIST  & 100 & 16 & $10^{-4}$ & 80, 90 & $10^{-4}$\\\hline
AFAD-LITE    & 100 & 64 & $10^{-4}$ & 40 & $10^{-5}$ \\\hline
AAF          & 100 & 64 & $10^{-4}$ & 40 & $10^{-5}$ \\\hline
EVA          & 100 & 64 & $10^{-4}$ & 40 & $10^{-5}$ \\\hline
WIKI         & 100 & 64 & $10^{-4}$ & 40 & $10^{-5}$ \\\hline
 
\end{tabular}}
\caption{Technical details of the experiments}
\label{tab:technical}
\end{table*}

The Adam optimizer was used in all experiments, with the default $\beta = (0.9, 0.999)$.
The means and standard deviations reported in table~\ref{tab:real_results} are based on 10 repetitions of each experiment, differing in weights initialization and random train-test splits, except for Adience, for which we repeated the experiment five times, using the same train-test splits as the creators of the dataset\footnote{\url{https://github.com/GilLevi/AgeGenderDeepLearning/tree/master/Folds/train_val_txt_files_per_fold}}.

\end{appendix}
\end{document}